\title{On Latent Distributions Without Finite Mean in Generative Models}
\author{Damian Leśniak\thanks{These two authors contributed equally}\qquad Igor Sieradzki\footnotemark[1]\qquad Igor Podolak \\ Jagiellonian University}
\newtheorem{observation}{Observation}[section]
\begin{document}
\maketitle

\begin{abstract}
We investigate the properties of multidimensional probability distributions in the context of latent space prior distributions of implicit generative models.
Our work revolves around the phenomena arising while decoding linear interpolations between two random latent vectors -- regions of latent space in close proximity to the origin of the space are sampled causing distribution mismatch.
We show that due to the Central Limit Theorem, this region is almost never sampled during the training process.
As a result, linear interpolations may generate unrealistic data and their usage as a tool to check quality of the trained model is questionable.
We propose to use multidimensional Cauchy distribution as the latent prior. Cauchy distribution does not satisfy the assumptions of the CLT and has a number of properties that allow it to work well in conjunction with linear interpolations. We also provide two general methods of creating non-linear interpolations that are easily applicable to a large family of common latent distributions. Finally we empirically analyze the quality of data generated from low-probability-mass regions for the DCGAN model on the CelebA dataset. 
\end{abstract}

\section{Introduction}

\subsection{Motivation and related work}

Generative latent variable models have grown to be a very popular research topic, with Variational Auto-Encoders (VAEs)~\cite{kingma2013auto} and Generative Adversarial Networks (GANs)~\cite{goodfellow2014generative} gaining a lot of research interest in the last few years. VAEs use a stochastic \textit{encoder} network to embed input data in a typically lower dimensional space, using a conditional probability distribution $p(\mathbf{z}|x)$ over possible latent space codes $z \in \mathbb{R}^D$. A stochastic \textit{decoder} network is then used to reconstruct the original sample. GANs, on the other hand, use a \textit{generator} network that creates data samples from noise samples $z \sim p(\mathbf{z})$, where $p(\mathbf{z})$ is a fixed prior distribution, and train a \textit{discriminator} network jointly to distinguish between real and "fake" (i.e. generated) data.

Both of those model families use a specific prior distribution on the latent space. In those models the latent codes aim to "explain" the underlying features of the real distribution $p(\mathbf{x})$ without explicit access to it. One would expect a~well-trained probabilistic model to encode the properties of the data. Typical priors for those latent codes are the multidimensional standard Normal distribution $\mathcal{N}(\mathbf{0}, \mathbf{I})$ or uniform distribution on a hypercube~$[-1, 1]^D$.

A \textit{linear interpolation} between two latent vectors $x_1, x_2$ is formally defined as a function
\[
f^L_{x_1,x_2}: [0,1]\ni \lambda \mapsto (1-\lambda)x_1 + \lambda x_2,
\] 
which may be understood as a traversal along the shortest path between these two endpoints. We are interested in decoding data for several values $\lambda$ and inspecting how smooth the transition between the decoded data points is.
Linear interpolations were utilized in previous work on generative models, mainly to show that the learned models do not overfit~\cite{kingma2013auto,goodfellow2014generative,dumoulin2016adversarially} and that the latent space is able to capture the semantic content of the data~\cite{radford2015unsupervised,donahue2016adversarial}. Linear interpolations can also be thought of as special case of vector algebra in the code space, similarly to the work done in word embeddings~\cite{mikolov2013efficient}.

While considered useful, linear interpolations used in conjunction with the most popular latent distributions are prone to traverse low probability mass regions. In high dimensions norms of vectors drawn from the latent distribution are concentrated around a certain value. Thus latent vectors are found near the surface of a sphere which results in the latent space distribution resembling a \emph{soap bubble}~\cite{ferenc}. This is explained using the Central Limit Theorem (CLT), which we show in~\ref{cha:cauchy}. Linear interpolations pass through inside of the sphere with high enough probability to drastically change the distribution of interpolated points in comparison to the prior distribution. This was reported~\cite{kilcher2017semantic} to result in flawed data generation.

Some approaches to counteract this phenomena were proposed: \citep{white2016sampling}
recommended using spherical interpolations to avoid traversing unlikely regions; \cite{agustsson2017optimal} suggest normalizing the norms of the points along the interpolation to match the prior distribution; \citep{kilcher2017semantic} propose using a modified prior distribution, which saturates the origin of the latent space. \citep{arvanitidis2017latent} gives an interesting discussion on latent space traversal using the theory of Riemannian spaces.

\subsection{Main contributions}

Firstly, we propose to use the Cauchy distribution as the prior in generative models.  This results in points along linear interpolations being distributed identically to those sampled from the prior. This is possible because Cauchy distributed noise does not satisfy the assumptions of the CLT.

Furthermore, we present two general ways of defining non-linear interpolations for a given latent distribution. Similarly, we are able to force points along interpolations to be distributed according to the prior.

Lastly, we show that the DCGAN \cite{radford2015unsupervised} model on the CelebA \cite{liu2015deep} dataset is able to generate sensible images from the region near the supposedly "empty" origin of the latent space. This is contrary to what has been reported so far and we further empirically investigate this result by evaluating the model trained with specific pathological distributions.

\subsection{Notations and mathematical conventions}

The normal distribution with mean $\mu$ and variance $\sigma^2$ is denoted by $\mathcal{N}(\mu, \sigma^2)$,
the uniform distribution on the interval $[a,b]$ is denoted by $\mathcal{U}(a,b)$,
and the Cauchy distribution with location $\mu$ and scale $\gamma$ is denoted by $\mathcal{C}(\mu, \gamma)$.
If not stated otherwise, the normal distribution has mean zero and variance one,
the uniform distribution is defined on the interval $[-1, 1]$,
and the Cauchy distribution has location zero and scale one.

The dimension of the latent space is denoted by $D$.

Multidimensional random variables are written in bold, e.g. $\mathbf{Z}$.
Lower indices denote coordinates of multidimensional random variables, e.g. $\mathbf{Z} = (Z_1, \ldots, Z_D)$.
Upper indices denote independent samples from the same distribution, e.g. $\mathbf{Z}^{(1)}, \mathbf{Z}^{(2)}, \ldots, \mathbf{Z}^{(n)}$.
If not stated otherwise, $D$-dimensional distributions are defined as products of $D$ one-dimensional independent equal distributions.

The norm used in this work is always the Euclidean norm.

\section{The Cauchy distribution}\label{cha:cauchy}
Let us assume that we want to train a generative model which has a $D$-dimensional latent space and a fixed latent probability distribution defined by random variable $\mathbf{Z} = (Z_1, Z_2, \ldots, Z_D)$. $Z_1, Z_2, \ldots, Z_D$ are the independent marginal distributions, and let $Z$ denote a one-dimensional random variable distributed identically to every $Z_j$, where $j=1,\ldots,D$.

For example, if $Z \sim \mathcal{U}(-1,1)$, then $\mathbf{Z}$ is distributed uniformly on the hypercube $[-1,1]^D$; if $Z \sim \mathcal{N}(0,1)$ , then $\mathbf{Z}$ is distributed according to the $D$-dimensional normal distribution with mean $\mathbf{0}$ and identity covariance matrix.

In the aforementioned cases we observe the so-called \textit{soap bubble} phenomena -- the values sampled from $\mathbf{Z}$ are concentrated close to a $(D-1)$-dimensional sphere, contrary to the low-dimensional intuition.

\begin{observation}
Let us assume that $Z^2$ has finite mean $\mu$ and finite variance $\sigma^2$. Then $\|\mathbf{Z}\|$ approximates the normal distribution with mean $\sqrt{D\mu}$ and variance $\dfrac{\sigma^2}{4\mu}$.
\end{observation}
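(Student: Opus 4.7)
The plan is to apply the Central Limit Theorem to $\|\mathbf{Z}\|^2$ and then transfer the resulting normal approximation to $\|\mathbf{Z}\|$ by the delta method (a first-order Taylor expansion of $\sqrt{\,\cdot\,}$ around the mean).

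First I would observe that
\[
\|\mathbf{Z}\|^2 \;=\; \sum_{j=1}^{D} Z_j^{2},
\]
which is a sum of $D$ i.i.d.\ copies of $Z^2$. By the hypothesis, each summand has finite mean $\mu$ and finite variance $\sigma^2$, so the classical CLT yields
\[
\frac{\|\mathbf{Z}\|^2 - D\mu}{\sqrt{D}\,\sigma} \;\xrightarrow{d}\; \mathcal{N}(0,1),
\]
i.e.\ for large $D$ the squared norm is approximately $\mathcal{N}(D\mu,\, D\sigma^2)$.

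Next I would apply the delta method with $g(x)=\sqrt{x}$, expanded around $x=D\mu$. Since $g'(D\mu)=1/(2\sqrt{D\mu})$, a first-order Taylor expansion gives
\[
\|\mathbf{Z}\| \;=\; \sqrt{\|\mathbf{Z}\|^2} \;\approx\; \sqrt{D\mu} \;+\; \frac{1}{2\sqrt{D\mu}}\bigl(\|\mathbf{Z}\|^2 - D\mu\bigr).
\]
Plugging the CLT approximation for $\|\mathbf{Z}\|^2$ into this expression, the mean becomes $\sqrt{D\mu}$ and the variance becomes
\[
\left(\frac{1}{2\sqrt{D\mu}}\right)^{\!2}\cdot D\sigma^{2} \;=\; \frac{\sigma^{2}}{4\mu},
\]
exactly as claimed. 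Note that the $\sqrt{D}$ fluctuations of $\|\mathbf{Z}\|^2$ are damped by the $1/\sqrt{D}$ factor coming from the derivative of $\sqrt{\,\cdot\,}$, which is why the limiting variance is $D$-independent.

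The main thing that needs care is the validity of the linearization step: the Taylor approximation of $\sqrt{x}$ is only accurate when $\|\mathbf{Z}\|^2$ stays within an $O(\sqrt{D})$-window of $D\mu$, whereas $g''$ blows up at $0$. The standard justification is that $\|\mathbf{Z}\|^2/D \to \mu$ in probability by the law of large numbers, so with probability tending to one $\|\mathbf{Z}\|^2$ lies in a region where the quadratic remainder term is negligible; combined with Slutsky's theorem this upgrades the heuristic expansion into a genuine weak-convergence statement, yielding the stated normal approximation for $\|\mathbf{Z}\|$.
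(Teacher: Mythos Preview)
Your proof is correct and follows essentially the same approach as the paper: apply the CLT to $\|\mathbf{Z}\|^2=\sum Z_j^2$ to get an approximate $\mathcal{N}(D\mu,D\sigma^2)$, then linearize $\sqrt{\cdot}$ around $D\mu$ with derivative $(2\sqrt{D\mu})^{-1}$ to obtain the stated mean and variance. Your version is in fact more careful, explicitly naming the delta method and the Slutsky/LLN justification for the linearization, whereas the paper's ``sketch of proof'' leaves those steps informal.
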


\begin{proof}[Sketch of proof]
Recall that $\|\mathbf{Z}\|^2 = Z_1^2 + \ldots + Z_D^2$.
If $Z_1, \ldots, Z_D$ are independent and distributed identically to $Z$, then $Z_1^2, \ldots, Z_D^2$ are independent and distributed identically to $Z^2$.
Using the central limit theorem we know that for large $D$
\[
\sqrt{D}(\dfrac{Z_1^2 + \ldots + Z_D^2}{D} - \mu) \simeq \mathcal{N}(0, \sigma^2)
\]
from which it follows
\[
(\dfrac{Z_1^2 + \ldots + Z_D^2}{D}) \simeq \mathcal{N}(\mu, \dfrac{\sigma^2}{D}),
\]
and thus we can approximate the squared norm of $\mathbf{Z}$ as
\[
\|\mathbf{Z}\|^2 = Z_1^2 + \ldots + Z_D^2 \simeq \mathcal{N}(D\mu, D\sigma^2).
\]
Due to the nature of the convergence in distribution dividing or multiplying both sides by factors $\sqrt{D}$ or $D$ that tend to infinity does not break the approximation.

The final step is to take the square root of both random variables. In proximity of $D\mu$, square root behaves approximately like scaling with constant $(2\sqrt{D\mu})^{-1}$. Additionally, $\mathcal{N}(D\mu, D\sigma^2)$ has \textit{width} proportional to $\sqrt{D}$, so we may apply affine transformation to the normal distribution to approximate the square root for large D, which in the end gives us:
\[
\|\mathbf{Z}\| \simeq \mathcal{N}(\sqrt{D\mu}, \dfrac{\sigma^2}{4\mu}).
\]
\end{proof}

An application of this observation to the two most common latent space distributions: 

\begin{itemize}

\item[$\circ$] if $Z \sim \mathcal{N}(0,1)$, then $Z^2$ has moments $\mu=1$, $\sigma^2=2$, thus $\|\mathbf{Z}\|\simeq \mathcal{N}(\sqrt{D}, 1)$,

\item[$\circ$] if $Z \sim \mathcal{U}(-1,1)$, then $Z^2$ has moments $\mu=\dfrac{1}{3}$, $\sigma^2=\dfrac{4}{45}$, thus $\|\mathbf{Z}\|\sim \mathcal{N}(\sqrt{\dfrac{D}{3}}, \dfrac{1}{15})$.

\end{itemize}

It is worth noting that the variance of the norm does not depend on $D$, which means that the distribution does not converge to the uniform distribution on the sphere of radius $\sqrt{D\mu}$. Another fact worth noting is the observation that the $D$-dimensional normal distribution with identity covariance matrix is isotropic, hence this distribution resembles the uniform distribution on a sphere. On the other hand, the uniform distribution on the hypercube $[-1,1]^D$ is concentrated in close proximity to the surface of the sphere, but has regions of high density corresponding to directions defined by the hypercube's vertices.





Now let us assume that we want to randomly draw two latent samples and interpolate linearly between them. We denote the two independent draws by $\mathbf{Z}^{(1)}$ and $\mathbf{Z}^{(2)}$. Let us examine the distribution of the random variable $\overline{\mathbf{Z}}:= \dfrac{\mathbf{Z}^{(1)} + \mathbf{Z}^{(2)}}{2}$. $\overline{\mathbf{Z}}$ is the distribution of the middle points of a linear interpolation between two vectors drawn independently from $\mathbf{Z}$. If the generative model was trained on noise sampled from $\mathbf{Z}$ and if the distribution of $\mathbf{Z}$ differs from $\overline{\mathbf{Z}}$, then data decoded from samples drawn from $\overline{\mathbf{Z}}$ \emph{might} be unrealistic, as such samples were never seen during training. One way to prevent this issue is to find $\mathbf{Z}$ such that $\overline{\mathbf{Z}}$ is distributed identically to $\mathbf{Z}$.

\begin{observation}\label{obs:mean}
If $\mathbf{Z}$ has a finite mean and $\mathbf{Z}, \overline{\mathbf{Z}}$ are identically distributed, then $\mathbf{Z}$ must be concentrated at a single point.
\end{observation}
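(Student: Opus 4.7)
The plan is to iterate the averaging hypothesis and then invoke the weak law of large numbers, which is precisely a finite-first-moment statement and so is compatible with the only assumption we are given.

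First, I would establish by induction on $n$ that $\mathbf{Z}$ has the same distribution as the average $\overline{\mathbf{Z}}_n := \frac{1}{2^n}\sum_{k=1}^{2^n}\mathbf{Z}^{(k)}$ of $2^n$ independent copies of itself. The base case $n=1$ is the hypothesis. For the inductive step, if $\mathbf{Z} \stackrel{d}{=} \overline{\mathbf{Z}}_n$, take two independent averages $A_1, A_2$ formed from $2^n$ independent copies each, drawn from disjoint pools; each is distributed as $\mathbf{Z}$ by the inductive hypothesis, so applying the original identity to this pair gives $\mathbf{Z} \stackrel{d}{=} (A_1+A_2)/2 = \overline{\mathbf{Z}}_{n+1}$.

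Next, since $\mathbf{Z}$ has finite mean $\mu := \mathbb{E}\mathbf{Z}$ (coordinate-wise), the weak law of large numbers applied independently in each coordinate gives $\overline{\mathbf{Z}}_n \to \mu$ in probability as $n\to\infty$, hence also in distribution. But every $\overline{\mathbf{Z}}_n$ has the same distribution as $\mathbf{Z}$, so the sequence of distributions is constant and must equal the weak limit, namely the Dirac mass $\delta_\mu$. Therefore $\mathbf{Z} = \mu$ almost surely, i.e.\ $\mathbf{Z}$ is concentrated at a single point.

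I do not expect any real obstacle: the inductive step is routine once one is careful to form the two sub-averages from disjoint families to preserve independence, and the only subtle point is to note that the weak law of large numbers applies under just the finite-mean assumption (no variance needed), which matches the hypothesis exactly. An alternative route, should one prefer to avoid the WLLN, is a characteristic-function argument: the identity $\varphi_{\mathbf{Z}}(t) = \varphi_{\mathbf{Z}}(t/2)^2$ iterates to $\varphi_{\mathbf{Z}}(t) = \varphi_{\mathbf{Z}}(t/2^n)^{2^n}$, and the expansion $\varphi_{\mathbf{Z}}(s) = 1 + i\mu \cdot s + o(\|s\|)$ (valid precisely because $\mathbf{Z}$ has a finite mean) forces $\varphi_{\mathbf{Z}}(t) = e^{i\mu\cdot t}$, again identifying $\mathbf{Z}$ with the point mass at $\mu$.
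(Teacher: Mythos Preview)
Your proposal is correct and follows essentially the same approach as the paper's sketch: induct to show that $\mathbf{Z}$ is equidistributed with the average of $2^n$ independent copies, then let $n\to\infty$ and use the law of large numbers to conclude that the distribution is the point mass at $\mathbb{E}[\mathbf{Z}]$. Your write-up is more careful about the inductive step and the precise hypothesis needed for the WLLN, and your characteristic-function alternative is a nice addition not present in the paper.
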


\begin{proof}[Sketch of proof]
Using induction on $n$ we can show that for all $n\in\mathbb{N}$ the average of $2^n$ independent samples from $\mathbf{Z}$ is distributed equally to $\mathbf{Z}$. On the other hand if $n\to\infty$, then the average distribution tends to $\mathbb{E}[\mathbf{Z}]$. Thus $\mathbf{Z}$ must be concentrated at $\mathbb{E}[\mathbf{Z}]$.
\end{proof}

There have been attempts to find $\mathbf{Z}$ with finite mean such that $\overline{\mathbf{Z}}$ is at least similar to $\mathbf{Z}$~\cite{kilcher2017semantic}, where similarity was measured with Kullback-Leibler divergence between the distributions. We extend this idea by using a specific distribution that has no finite mean, namely the multidimensional Cauchy distribution.

Let us start with a short review of useful properties Cauchy distribution in of one-dimensional case. Let $C \sim \mathcal{C}(0,1)$. Then:
\begin{enumerate}
  \item The probability density function of $C$ is equal to $\dfrac{1}{\pi(1+x^2)}$.
  
  \item For $C$ all moments of order greater than or equal to one are undefined. The location parameter should not be confused with the mean.
  
  \item If $C^{(1)}$ and $C^{(2)}$ are independent and distributed identically to $C$, then $\dfrac{C^{(1)} + C^{(2)}}{2}$ is distributed identically to $C$. Furthermore, if $\lambda\in[0,1]$, then $\lambda C^{(1)} + (1-\lambda)C^{(2)}$ is also distributed identically to $C$.
  
  \item If $C^{(1)}, \ldots, C^{(n)}$ are independent and distributed identically to $C$, and $\lambda_1, \ldots, \lambda_n\in[0,1]$ with $\lambda_1 + \ldots + \lambda_n = 1$, then $\lambda_1 C^{(1)}, \ldots, \lambda_n C^{(n)}$ is distributed identically to $C$.
\end{enumerate}

Those are well-known facts about the Cauchy distribution, and proving them is a common exercise in statistics textbooks. However, according to our best knowledge, the Cauchy distribution has never been used for in the context of generative models. With this in mind, the most important take-away is the following observation:


\begin{observation}\label{obs:cauchy_dist}
If $\mathbf{Z}$ is distributed according to the $D$-dimensional Cauchy distribution, then a~linear interpolation between any number of latent points does not change the distribution.
\end{observation}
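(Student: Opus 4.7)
The plan is to reduce to the one-dimensional case by exploiting the product structure of the $D$-dimensional Cauchy distribution together with property~4 from the list above. Let $\lambda_1,\ldots,\lambda_n\ge 0$ with $\lambda_1+\cdots+\lambda_n=1$, and let $\mathbf{Z}^{(1)},\ldots,\mathbf{Z}^{(n)}$ be independent $D$-dimensional standard Cauchy vectors. I want to show that $\mathbf{W}:=\lambda_1\mathbf{Z}^{(1)}+\cdots+\lambda_n\mathbf{Z}^{(n)}$ has the same distribution as $\mathbf{Z}$. Since the paper adopts the convention that a $D$-dimensional distribution is the product of $D$ i.i.d.\ one-dimensional marginals, it suffices to verify two things: that each coordinate $W_j$ is $\mathcal{C}(0,1)$, and that $W_1,\ldots,W_D$ are mutually independent.

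First I would look coordinate-wise. The $j$-th coordinate of $\mathbf{W}$ is $W_j=\sum_{i=1}^n \lambda_i Z^{(i)}_j$. Because the samples are independent and each $\mathbf{Z}^{(i)}$ has independent coordinates, the variables $Z^{(1)}_j,\ldots,Z^{(n)}_j$ are i.i.d.\ $\mathcal{C}(0,1)$. Property~4 of the one-dimensional Cauchy distribution then gives $W_j\sim \mathcal{C}(0,1)$ immediately.

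Next I would check independence across coordinates. The random vector $(W_1,\ldots,W_D)$ is obtained by applying, for each $j$, a measurable function of the family $\{Z^{(i)}_j:i=1,\ldots,n\}$. For $j\neq k$ these two families are disjoint subcollections of the doubly-indexed array $\{Z^{(i)}_\ell\}$ whose entries are all mutually independent (independence across $i$ comes from independence of samples, and independence across $\ell$ from the product structure of each $\mathbf{Z}^{(i)}$). Hence $W_j$ and $W_k$ are independent for $j\neq k$, and more generally $W_1,\ldots,W_D$ are jointly independent. Combining this with the previous paragraph, $\mathbf{W}$ is a product of $D$ i.i.d.\ $\mathcal{C}(0,1)$ marginals, i.e.\ $\mathbf{W}\stackrel{d}{=}\mathbf{Z}$.

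There is no real obstacle; the argument is a clean coordinate-wise reduction that consumes property~4 as a black box. The only subtlety worth flagging is the phrase \emph{any number of latent points}: the proof as above handles arbitrary finite $n$ with arbitrary convex weights, which covers linear interpolation between two points as the special case $n=2$, $\lambda_1=1-\lambda$, $\lambda_2=\lambda$.
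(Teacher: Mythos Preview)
Your argument is correct and follows essentially the same coordinate-wise reduction as the paper's sketch, invoking the one-dimensional stability property (property~4) on each coordinate. The only difference is that you spell out the independence of $W_1,\ldots,W_D$ explicitly, whereas the paper's sketch leaves that implicit in the word ``thus''; otherwise the two proofs coincide.
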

\begin{proof}[Sketch of proof]
Let $C \sim \mathcal{C}(0,1)$ and $\mathbf{Z}=(Z_1,\ldots, Z_D)$. The variables $Z_1,\ldots, Z_D$ are independent and distributed equally to $C$. If $\mathbf{Z}^{(1)}, \ldots, \mathbf{Z}^{(n)}$ are independent and distributed identically to $\mathbf{Z}$, and $\lambda_1, \ldots, \lambda_n\in[0,1]$ with $\lambda_1 + \ldots + \lambda_n = 1$ are fixed, then 
$\lambda_1 Z_j^{(1)}, \ldots, \lambda_n Z_j^{(n)}$ is distributed equally to $Z_j$ for $j=1,\ldots,D$, thus $\lambda_1 \mathbf{Z}^{(1)}, \ldots, \lambda_n \mathbf{Z}^{(n)}$ is distributed equally to $\mathbf{Z}$.
\end{proof}

We observed that the normal and uniform distributions are concentrated around a sphere with radius proportional to $\sqrt{D}$. On the other hand, the multidimensional Cauchy distribution \textit{fills} the latent space. It should be noted that for the $D$-dimensional Cauchy distribution the region near the origin of the latent space is empty -- similarly to the normal and uniform distributions.

Figure~\ref{fig:density} shows a comparison between approximations of density functions of $\|\mathbf{Z}\|$ for multidimensional normal, uniform and Cauchy distributions, and a distribution proposed by~\cite{kilcher2017semantic}.

\begin{figure}[htb]
  \centering
  \includegraphics[width=\textwidth]{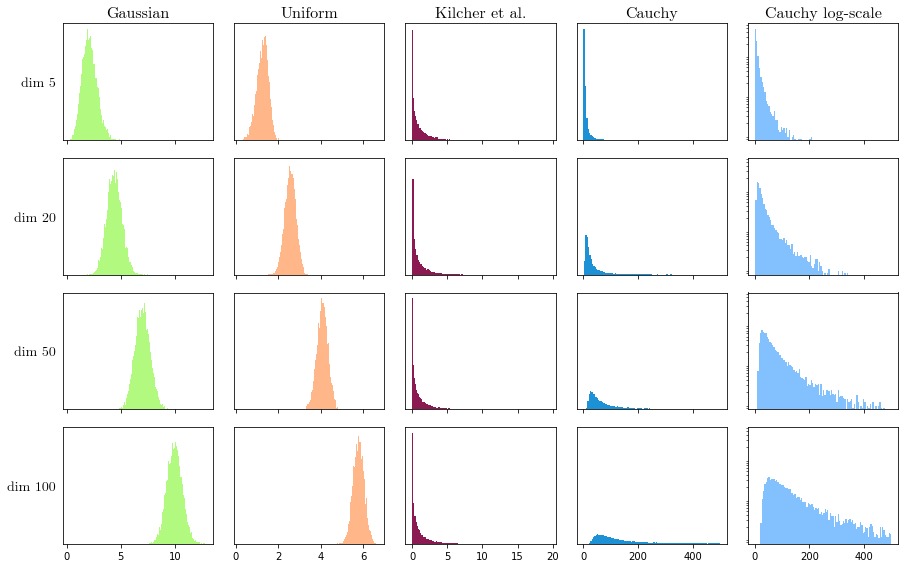}
  \caption{Comparison of approximate distribution of Euclidean norms for $10,000$ samples with increasing dimensionality from different probability distributions.}
  \label{fig:density}
\end{figure}

\begin{figure}[htb]
  \centering
  \includegraphics[width=0.9\textwidth]{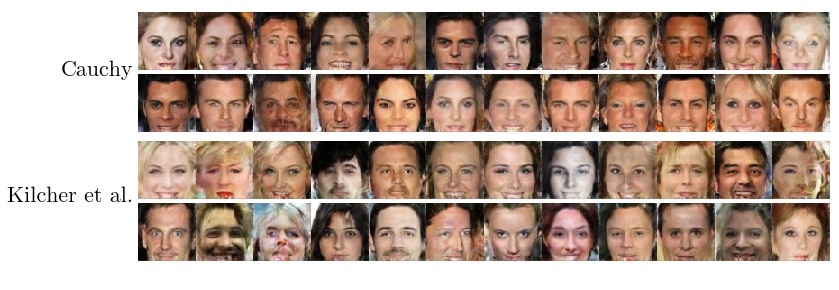}
  \caption{Comparison of samples from DCGAN trained on the Cauchy distribution and one trained on the distribution proposed by \cite{kilcher2017semantic}.}
  \label{fig:cauchy_samples}
\end{figure}

The one-dimensional Cauchy distribution has heavy tails, hence we can expect that one of $\mathbf{Z^{(i)}}$ coordinates will usually be sufficiently larger (by absolute value) than the others. This could potentially have negative impact on training of the GAN model, but we did not observe such difficulties. However, there is an obvious trade-off with using a distribution with heavy tails, as there will always be a number of samples with high enough norm. For those samples the generator will not be able to create sensible data points. 
A particular result of choosing a Cauchy distributed prior in GANs is the fact that during inference there will always be a number of "failed" generated data points due to latent vectors being sampled from the tails. Some of those faulty examples are presented in the appendix~\ref{cha:app_cauchy}. Figure~\ref{fig:cauchy_samples} shows a set of samples from the DCGAN model trained on the CelebA dataset using the Cauchy and distribution from~\cite{kilcher2017semantic} and Figure~\ref{fig:cauchy_inter} shows linear interpolation on those two models.

\begin{figure}[htb]
  \centering
  \includegraphics[width=0.9\textwidth]{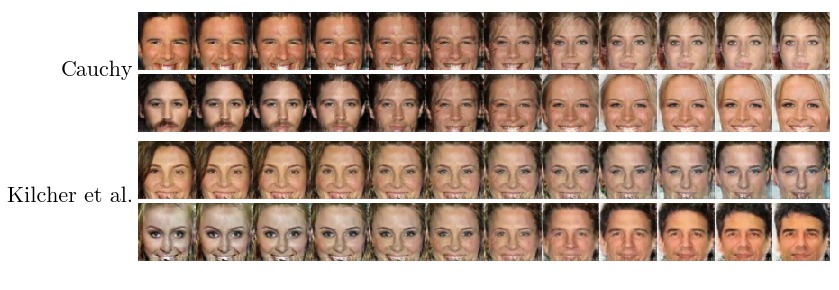}
  \caption{Comparison of linear interpolations from DCGAN trained on the Cauchy distribution and one trained on the distribution proposed by \cite{kilcher2017semantic}.}
  \label{fig:cauchy_inter}
\end{figure}

\section{Interpolations}\label{cha:interpolations}

In this section we list current work on interpolations in high dimensional latent spaces in generative models. We present two methods that perform well with noise priors with finite first moments, i.e.~the mean. Again, we define a linear interpolation between two points $x_1$ and $x_2$ as a function 

\[
f^L_{x_1,x_2}:[0,1]\ni\lambda\mapsto  (1-\lambda) x_1 + \lambda x_2.
\]

In some cases we will use the term \textit{interpolation} for the \textit{image} of the function, as opposed to the function itself. 
We will list four properties an interpolation can have that we believe are important in the context of generative models:

\textbf{Property 1.} The interpolation should be continuous with respect to $\lambda, x_1$ and $x_2$.

\textbf{Property 2.} For every $x_1, x_2$ the interpolation $f_{x_1,x_2}$ should represent the shortest path between the two endpoints.

\textbf{Property 3.} If two points $x_1', x_2'$ are a in the interpolation between $x_1$ and $x_2$, then the whole interpolation from $x_1'$ to $x_2'$ should be included in the interpolation between $x_1$ and $x_2$.

\textbf{Property 4.} If $\mathbf{Z}$ defines a distribution on the $D$-dimensional latent space and $\mathbf{Z}^{(1)}, \mathbf{Z}^{(2)}$ are independent and distributed identically to $\mathbf{Z}$, then for every $\lambda\in[0,1]$ the random variable $f_{\mathbf{Z}^{(1)}, \mathbf{Z}^{(2)}}(\lambda)$ should be distributed identically to $\mathbf{Z}$.

The first property enforces that an interpolation should not make any \textit{jumps} and that interpolations between pairs of similar endpoints should also be similar to each other. The second one is purposefully ambiguous. In absence of any additional information about the latent space it feels natural to use the Euclidean metric and assume that only the linear interpolation has this property. There has been some work on equipping the latent space with a stochastic Riemannian metric \cite{arvanitidis2017latent} that additionally depends on the generator function. With such a metric the shortest path can be defined using geodesics. The third property is closely associated  with the second one and codifies common-sense intuition about shortest paths. The fourth property is in our minds the most important desideratum of the linear interpolation, similarly to what \cite{kilcher2017semantic} stated. To understand these properties better, we will now analyze the following interpolations.

\subsection{Linear interpolation}

The \emph{linear interpolation} is defined as

\[
f^L_{x_1,x_2}(\lambda) = (1-\lambda) x_1 + \lambda x_2.
\]

It obviously has properties 1-3. Satisfying property 4 is impossible for the most commonly used probability distributions, as they have finite mean, which was shown in observation~\ref{obs:mean}.

\subsection{Spherical linear interpolation}
As in~\cite{shoemake1985animating,white2016sampling}, the \emph{spherical linear interpolation} is defined as

\[
f^{SL}_{x_1,x_2}(\lambda) = \dfrac{\sin {[(1-\lambda)\Omega}]}{\sin \Omega} x_1 + \dfrac{\sin [\lambda\Omega]}{\sin \Omega} x_2,
\]

where $\Omega$ is the angle between vectors $x_1$ and $x_2$.

This interpolation is continuous nearly everywhere (with the exception of antiparallel endpoint vectors) and satisfies property 3. It satisfies property 2 in the following sense: if vectors $x_1$ and $x_2$ have the same length $R$, then the interpolation corresponds to a geodesic on the sphere of radius $R$. Furthermore:

\begin{observation}
Property 4 is satisfied if $\mathbf{Z}$ has uniform distribution on the zero-centered sphere of radius $R>0$.
\end{observation}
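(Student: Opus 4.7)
The plan is to exploit the rotational symmetry of the uniform distribution on the sphere, which uniquely characterises it. Concretely, I would first note the basic fact that spherical linear interpolation preserves the norm when the endpoints share it: if $\|x_1\| = \|x_2\| = R$, a direct computation with the formula (or the standard derivation of slerp as a unit-speed geodesic on a sphere rescaled by $R$) shows that $\|f^{SL}_{x_1,x_2}(\lambda)\| = R$ for every $\lambda \in [0,1]$. Therefore $f^{SL}_{\mathbf{Z}^{(1)},\mathbf{Z}^{(2)}}(\lambda)$ is almost surely supported on the sphere of radius $R$.

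Next I would verify that slerp is equivariant under orthogonal transformations. Let $U$ be any element of the orthogonal group $O(D)$. Because $U$ preserves the Euclidean inner product, the angle between $Ux_1$ and $Ux_2$ equals the angle $\Omega$ between $x_1$ and $x_2$, so the scalar coefficients $\sin[(1-\lambda)\Omega]/\sin\Omega$ and $\sin[\lambda\Omega]/\sin\Omega$ are unchanged. Linearity of $U$ then gives
\[
f^{SL}_{Ux_1,Ux_2}(\lambda) = U\,f^{SL}_{x_1,x_2}(\lambda).
\]

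Now I would combine these two ingredients. Since $\mathbf{Z}$ is uniform on the sphere, $(\mathbf{Z}^{(1)},\mathbf{Z}^{(2)})$ is equal in distribution to $(U\mathbf{Z}^{(1)},U\mathbf{Z}^{(2)})$ for every $U \in O(D)$. Applying the equivariance above, the law of $f^{SL}_{\mathbf{Z}^{(1)},\mathbf{Z}^{(2)}}(\lambda)$ coincides with the law of $U\,f^{SL}_{\mathbf{Z}^{(1)},\mathbf{Z}^{(2)}}(\lambda)$ for every orthogonal $U$. Thus the distribution of $f^{SL}_{\mathbf{Z}^{(1)},\mathbf{Z}^{(2)}}(\lambda)$ is a rotationally invariant probability measure supported on the sphere of radius $R$. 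By the uniqueness of Haar measure on the homogeneous space $O(D)/O(D-1) \cong S^{D-1}$, there is only one such measure, namely the uniform distribution. Hence $f^{SL}_{\mathbf{Z}^{(1)},\mathbf{Z}^{(2)}}(\lambda)$ is identically distributed to $\mathbf{Z}$.

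The only mildly technical points are the norm-preserving identity (a routine trigonometric check) and the invocation of uniqueness of the rotation-invariant probability measure on the sphere; neither is a real obstacle. One minor caveat to mention is the measure-zero set of antiparallel pairs $(x_1,x_2)$ on which $\Omega = \pi$ makes the formula degenerate; since $\mathbf{Z}^{(1)} = -\mathbf{Z}^{(2)}$ happens with probability zero when $D \geq 2$, this does not affect the distributional statement.
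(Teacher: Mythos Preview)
Your proposal is correct and follows essentially the same approach as the paper: both argue that the slerp output lies on the sphere, that the joint distribution of the endpoints is invariant under the isometry group of the sphere, and hence the output distribution is rotation-invariant, forcing it to be uniform by uniqueness. Your version is simply more explicit---you spell out the norm-preservation identity and the equivariance of slerp under $O(D)$, and you handle the measure-zero antiparallel case---whereas the paper's sketch states these facts without justification.
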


\begin{proof}[Sketch of proof]
Let $\lambda\in[0,1]$ and let $f_{\mathbf{Z}^{(1)}, \mathbf{Z}^{(2)}}(\lambda)$ be concentrated on the zero-centered sphere. The distribution of all pairs sampled from $\mathbf{Z}^{(1)}\times \mathbf{Z}^{(2)}$ is identical to the product of two uniform distributions on the sphere, thus invariant to all isometries of the sphere. Then $f_{\mathbf{Z}^{(1)}, \mathbf{Z}^{(2)}}(\lambda)$ also must be invariant to all isometries, and the only probability distribution having this property is the uniform distribution.
\end{proof}

\subsection{Normalized interpolation} 
Introduced in \cite{agustsson2017optimal}, the \emph{normalized\footnote{Originally referred to as \textit{distribution matched}.} interpolation} is defined as

\[
f^N_{x_1,x_2}(\lambda) = \dfrac{(1-\lambda) x_1 + \lambda x_2}{\sqrt{(1-\lambda)^2 + \lambda^2}}.
\]

It satisfies property 1, but neither property 2 nor 3, which can be easily shown in the extreme case of $x_1 = x_2$. As for property 4:

\begin{observation}
The normalized interpolation satisfies property 4 if\, $\mathbf{Z} \sim \mathcal{N}(\mathbf{0},\mathbf{I})$. 
\end{observation}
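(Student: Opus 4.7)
The plan is to reduce the claim to a one-dimensional statement about the standard normal distribution, exploiting the fact that both the standard Gaussian prior and the normalized interpolation act coordinate-wise and independently across coordinates.

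First I would observe that, writing $\mathbf{Z}^{(1)} = (Z_1^{(1)}, \ldots, Z_D^{(1)})$ and $\mathbf{Z}^{(2)} = (Z_1^{(2)}, \ldots, Z_D^{(2)})$, the $j$-th coordinate of $f^N_{\mathbf{Z}^{(1)}, \mathbf{Z}^{(2)}}(\lambda)$ equals
\[
W_j := \frac{(1-\lambda) Z_j^{(1)} + \lambda Z_j^{(2)}}{\sqrt{(1-\lambda)^2 + \lambda^2}},
\]
so the output coordinates $W_1, \ldots, W_D$ are measurable functions of disjoint groups of input coordinates. Since the $2D$ variables $Z_j^{(i)}$ are mutually independent, the $W_j$ are also mutually independent. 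Hence it suffices to show that each $W_j$ has distribution $\mathcal{N}(0,1)$.

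Second, I would invoke the standard stability property of the univariate normal distribution: if $X, Y \sim \mathcal{N}(0,1)$ are independent and $a, b \in \mathbb{R}$, then $aX + bY \sim \mathcal{N}(0, a^2+b^2)$. Applied with $a = (1-\lambda)/\sqrt{(1-\lambda)^2+\lambda^2}$ and $b = \lambda/\sqrt{(1-\lambda)^2+\lambda^2}$ we get $a^2 + b^2 = 1$, so $W_j \sim \mathcal{N}(0,1)$ for each $j$. Combined with the independence observation, $(W_1, \ldots, W_D) \sim \mathcal{N}(\mathbf{0}, \mathbf{I})$, which is exactly property 4.

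There is no real obstacle here; the argument is essentially the observation that the coefficient vector $\bigl((1-\lambda), \lambda\bigr)/\sqrt{(1-\lambda)^2+\lambda^2}$ has unit Euclidean norm, and unit-norm linear combinations preserve the standard Gaussian distribution. The only thing to be a bit careful about is the edge case $\lambda \in \{0,1\}$, where the normalizing denominator equals $1$ and the statement reduces to $\mathbf{Z}^{(1)} \sim \mathbf{Z}$ or $\mathbf{Z}^{(2)} \sim \mathbf{Z}$, which is trivially true.
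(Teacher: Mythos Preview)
Your argument is correct and is essentially the same as the paper's: the paper simply states that by ``elementary properties of the normal distribution'' the normalized combination $\frac{(1-\lambda)\mathbf{Z}^{(1)} + \lambda\mathbf{Z}^{(2)}}{\sqrt{(1-\lambda)^2+\lambda^2}}$ is $\mathcal{N}(\mathbf{0},\mathbf{I})$, and you have unpacked exactly that via coordinate-wise independence and the one-dimensional stability of the Gaussian under unit-norm linear combinations.
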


\begin{proof}[Sketch of proof]
Let $\lambda\in[0,1]$. The random variables $\mathbf{Z}^{(1)}$ and $\mathbf{Z}^{(2)}$ are both distributed according to $\mathcal{N}(\mathbf{0},\mathbf{I})$. Then, using elementary properties of the normal distribution:

\[
\dfrac{(1-\lambda)\mathbf{Z}^{(1)} + \lambda\mathbf{Z}^{(2)}}{\sqrt{(1-\lambda)^2 + \lambda^2}} \sim \mathcal{N}(\mathbf{0},\mathbf{I}).
\]

\end{proof}

If vectors $x_1$ and $x_2$ are orthogonal and have equal length, then this interpolation is equal to the spherical linear interpolation from the previous section.

\subsection{Cauchy-linear interpolation.}

Here we present a general way of designing interpolations that satisfy properties 1, 3, and 4. Let:
\begin{itemize}
\item[$\circ$] $L$ be the $D$-dimensional latent space,
\item[$\circ$] $\mathbf{Z}$ define the probability distribution on the latent space,
\item[$\circ$] $\mathbf{C}$ be distributed according to the $D$-dimensional Cauchy distribution on $L$,
\item[$\circ$] $K$ be a subset of $L$ such that all mass of $\mathbf{Z}$ is concentrated on this set,
\item[$\circ$] $g: L\to K$ be a bijection such that $g(\mathbf{C})$ be identically distributed as $\mathbf{Z}$ on $K$.
\end{itemize}

Then for $x_1,x_2\in K$ we define the \emph{Cauchy-linear interpolation as}
\[
f^{CL}_{x_1,x_2}(\lambda) = g\big((1-\lambda)g^{-1}(x_1) + \lambda g^{-1}(x_2)\big).
\]
In other words, for endpoints $x_1, x_2 \sim \mathbf{Z}$:
\begin{enumerate}
\item Transform $x_1$ and $x_2$ using $g^{-1}$.
\item Linearly interpolate between the transformations to get $\overline{x} = (1-\lambda)g^{-1}(x_1) + \lambda g^{-1}(x_2)$ for all $\lambda \in [0,1]$.
\item Transform $\overline{x}$ back to the original space using $g$. 
\end{enumerate}

With some additional assumptions we can define $g$ as $CDF_C^{-1} \circ CDF_Z$, where $CDF_C^{-1}$ is the inverse of the cumulative distribution function (CDF) of the Cauchy distribution, and $CDF_Z$ is the CDF of the original distribution $\mathbf{Z}$. If additionally $\mathbf{Z}$ is distributed identically to the product of $D$ independent one-dimensional distributions, then we can use this formula coordinate-wise.

\begin{observation}
With the above assumptions the Cauchy-linear interpolation satisfies property 4.
\end{observation}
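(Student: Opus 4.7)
The plan is to reduce property 4 for $f^{CL}$ to Observation~\ref{obs:cauchy_dist} by pushing the two endpoints through $g^{-1}$, where they become independent $D$-dimensional Cauchy samples, and then pushing the convex combination back through $g$. The construction of $g$ as a bijection with $g(\mathbf{C})\stackrel{d}{=}\mathbf{Z}$ on $K$ is precisely what makes this transport argument work.

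First I would unpack the pushforward structure. Since $g: L \to K$ is a bijection and the entire mass of $\mathbf{Z}$ lies in $K$, the map $g^{-1}: K \to L$ is well defined $\mathbf{Z}$-almost surely. Combined with $g(\mathbf{C}) \stackrel{d}{=} \mathbf{Z}$, this gives $g^{-1}(\mathbf{Z}) \stackrel{d}{=} \mathbf{C}$. Applying this to the two independent endpoints, I obtain that $\mathbf{C}^{(1)}:=g^{-1}(\mathbf{Z}^{(1)})$ and $\mathbf{C}^{(2)}:=g^{-1}(\mathbf{Z}^{(2)})$ are independent $D$-dimensional Cauchy random variables (independence is preserved because $g^{-1}$ is applied coordinate-wise to independent inputs).

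Next I would invoke Observation~\ref{obs:cauchy_dist} with $n=2$ and weights $\lambda_1 = 1-\lambda$, $\lambda_2 = \lambda$ to conclude
\[
(1-\lambda)\mathbf{C}^{(1)} + \lambda\mathbf{C}^{(2)} \stackrel{d}{=} \mathbf{C}.
\]
Applying $g$ to both sides and using $g(\mathbf{C}) \stackrel{d}{=} \mathbf{Z}$ then yields
\[
f^{CL}_{\mathbf{Z}^{(1)},\mathbf{Z}^{(2)}}(\lambda) = g\bigl((1-\lambda)g^{-1}(\mathbf{Z}^{(1)}) + \lambda g^{-1}(\mathbf{Z}^{(2)})\bigr) \stackrel{d}{=} g(\mathbf{C}) \stackrel{d}{=} \mathbf{Z},
\]
which is exactly property 4.

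The only subtlety, and what I would flag as the main obstacle, is measure-theoretic hygiene around the bijection $g$: we need $g^{-1}$ to be measurable and defined $\mathbf{Z}$-a.s., and we need the pushforward identity $g_{*}\mu_{\mathbf{C}} = \mu_{\mathbf{Z}}$ to genuinely give $g^{-1}_{*}\mu_{\mathbf{Z}} = \mu_{\mathbf{C}}$. For the canonical choice $g = CDF_C^{-1} \circ CDF_Z$ applied coordinate-wise (under the product-distribution assumption stated in the paper), $g$ is monotone and continuous on each coordinate wherever $CDF_Z$ is strictly increasing, so these technicalities reduce to standard facts about the inverse-CDF transform; under mild regularity on $CDF_Z$ the argument goes through verbatim.
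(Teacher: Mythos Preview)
Your argument is correct and follows essentially the same route as the paper's sketch: push the endpoints through $g^{-1}$ to obtain independent Cauchy vectors, use the convex-combination stability of the Cauchy distribution, and push back through $g$. You simply add more justification (why $g^{-1}(\mathbf{Z})\stackrel{d}{=}\mathbf{C}$ follows from the bijection hypothesis, and the measurability caveats) than the paper bothers to spell out.
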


\begin{proof}[Sketch of proof]
Let $\lambda\in[0,1]$.
First observe that $g^{-1}(\mathbf{Z}^{(1)})$ and $g^{-1}(\mathbf{Z}^{(2)})$ are independent and distributed identically to $\mathbf{C}$.
Likewise, $(1-\lambda) g^{-1}(\mathbf{Z}^{(1)}) + \lambda g^{-1}(\mathbf{Z}^{(2)}) \sim \mathbf{C}$. By the assumption on $g$ we have 
$g((1-\lambda)g^{-1}(\mathbf{Z}^{(1)}) + \lambda g^{-1}(\mathbf{Z}^{(1)})) \sim \mathbf{Z}$.

\end{proof}

\subsection{Spherical Cauchy-linear interpolation.}

We might want to enforce the interpolation to have some other desired properties. For example: to behave exactly as the spherical linear interpolation, if only the endpoints have equal norm. For that purpose we require additional assumptions. Let:
\begin{itemize}
\item[$\circ$] $\mathbf{Z}$ be distributed isotropically,
\item[$\circ$] $C$ be distributed according to the one-dimensional Cauchy distribution,
\item[$\circ$] $g: \mathbb{R}\to (0,+\infty)$ be a bijection such that $g(C)$ is distributed identically as $\|\mathbf{Z}\|$ on $(0,+\infty)$.
\end{itemize}

Then we can modify the spherical linear interpolation formula to define what we call the \emph{spherical Cauchy-linear interpolation}:

\begin{equation*} 
  f_{x_1,x_2}(\lambda) = \Big(\dfrac{\sin {[(1-\lambda)\Omega}]}{\sin \Omega} \dfrac{x_1}{\|x_1\|} + \dfrac{\sin [\lambda\Omega]}{\sin \Omega} \dfrac{x_2}{\|x_2\|}\Big)\Big[g\big((1-\lambda)g^{-1}(\|x_1\|) + \lambda g^{-1}(\|x_2\|)\big)\Big],
\end{equation*}

where $\Omega$ is the angle between vectors $x_1$ and $x_2$. In other words:
\begin{enumerate}
\item Interpolate the \textit{directions} of latent vectors using spherical linear interpolation.
\item Interpolate the \textit{norms} using \textit{Cauchy-linear interpolation} from previous section.
\end{enumerate}

Again, with some additional assumptions we can define $g$ as $CDF_C^{-1} \circ CDF_{\|\mathbf{Z}\|}$. For example: let $\mathbf{Z}$ be a $D$-dimensional normal distribution with zero mean and identity covariance matrix. Then $\|\mathbf{Z}\| \sim \sqrt{\chi^2_D}$ and 

\[
CDF_{\sqrt{\chi^2_D}}(x) = CDF_{\chi^2_D}(x^2) = \dfrac{1}{\Gamma(D/2)}\,\gamma\left(\dfrac{D}{2}, \dfrac{x^2}{2}\right), \text{ for every } x \geq 0.
\]

Thus we set $g(x) = (CDF_C^{-1} \circ CDF_{\chi^2_D})(x^2)$, with $g^{-1}(x) = \sqrt{(CDF_{\chi^2_D}^{-1} \circ CDF_C)(x)}$.

\begin{observation}
With the assumptions as above, the spherical Cauchy-linear interpolation satisfies property 4.
\end{observation}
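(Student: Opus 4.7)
The plan is to exploit the decomposition of an isotropic distribution into an independent direction on the unit sphere and a radial magnitude, and then to apply the two preceding observations---on spherical linear interpolation for the directions, and on Cauchy-linear interpolation for the norms---to each factor separately.

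First, since $\mathbf{Z}$ is isotropic I would write $\mathbf{Z}^{(i)} = \|\mathbf{Z}^{(i)}\|\,\mathbf{U}^{(i)}$ with $\mathbf{U}^{(i)} := \mathbf{Z}^{(i)}/\|\mathbf{Z}^{(i)}\|$ uniformly distributed on the unit sphere and independent of $\|\mathbf{Z}^{(i)}\|$. Independence of $\mathbf{Z}^{(1)}$ and $\mathbf{Z}^{(2)}$ upgrades this to mutual independence of the quadruple $\mathbf{U}^{(1)}, \mathbf{U}^{(2)}, \|\mathbf{Z}^{(1)}\|, \|\mathbf{Z}^{(2)}\|$. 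The key observation that drives the argument is that the angular factor in the definition of $f$ depends on $x_1, x_2$ only through $\mathbf{U}^{(1)}, \mathbf{U}^{(2)}$ (because $\Omega$ is the angle between $x_1$ and $x_2$, which equals the angle between the corresponding unit vectors), while the radial factor depends only on $\|\mathbf{Z}^{(1)}\|$ and $\|\mathbf{Z}^{(2)}\|$.

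Next, I would handle each factor. For the angular part, the spherical linear interpolation of the independent uniform unit vectors $\mathbf{U}^{(1)}, \mathbf{U}^{(2)}$ is itself a unit vector, and by the previous observation on spherical linear interpolation (applied with $R=1$) it is uniformly distributed on the unit sphere. For the radial part, set $C^{(i)} := g^{-1}(\|\mathbf{Z}^{(i)}\|)$; by the assumption on $g$ each $C^{(i)}$ is standard Cauchy, and by the independence just established they are independent. The Cauchy stability property recalled in Section~\ref{cha:cauchy} then gives $(1-\lambda)C^{(1)} + \lambda C^{(2)} \sim \mathcal{C}(0,1)$, and applying $g$ yields a random variable distributed as $\|\mathbf{Z}\|$.

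Finally, the independence from the first step carries over through the two factors: the interpolated direction and the interpolated norm are independent, with the direction uniform on the unit sphere and the norm distributed as $\|\mathbf{Z}\|$. Since an isotropic distribution on $\mathbb{R}^D$ is uniquely determined by its radial marginal, this pins down the distribution of $f_{\mathbf{Z}^{(1)}, \mathbf{Z}^{(2)}}(\lambda)$ as $\mathbf{Z}$. The main subtlety is the independence bookkeeping---verifying that the angular and radial factors of $f$ really do depend on disjoint pieces of the input data so that the two building-block observations can be invoked in parallel; once this clean separation is in place, the conclusion is an immediate combination of results already proved.
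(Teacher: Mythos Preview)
Your proposal is correct and follows essentially the same route as the paper's own sketch: decompose into independent direction and norm, apply the spherical-linear observation to the directions and the Cauchy-linear observation to the norms, then reassemble using the fact that an isotropic law is determined by its radial marginal. If anything, you are slightly more careful than the paper in noting that $\Omega$ depends only on the unit vectors and in tracking the independence bookkeeping.
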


\begin{proof}[Sketch of proof]
We will use the fact that two isotropic probability distributions are equal if distributions of their euclidean norms are equal. The following holds:
\begin{itemize}
  \item[$\circ$] All the following random variables are independent: $ \dfrac{\mathbf{Z}^{(1)}}{\|\mathbf{Z}^{(1)}\|}, \dfrac{\mathbf{Z}^{(2)}}{\|\mathbf{Z}^{(2)}\|}, \|\mathbf{Z}^{(1)}\|, \|\mathbf{Z}^{(2)}\|$.
  
  \item[$\circ$] $\|\mathbf{Z}^{(1)}\|$ and $\|\mathbf{Z}^{(2)}\|$ are both are distributed identically to $\|\mathbf{Z}\|$.

\item[$\circ$] $\dfrac{\mathbf{Z}^{(1)}}{\|\mathbf{Z}^{(1)}\|}$ and $\dfrac{\mathbf{Z}^{(2)}}{\|\mathbf{Z}^{(2)}\|}$ are both distributed uniformly on the sphere of radius $1$.

\end{itemize}

Let $\lambda\in[0,1]$. Note that

\begin{equation}\label{eg:sph_lin}
\dfrac{\sin {[(1-\lambda)\Omega}]}{\sin \Omega} \dfrac{\mathbf{Z}^{(1)}}{\|\mathbf{Z}^{(1)}\|} + \dfrac{\sin [\lambda\Omega]}{\sin \Omega} \dfrac{\mathbf{Z}^{(2)}}{\|\mathbf{Z}^{(2)}\|}
\end{equation}
    
is uniformly distributed on the sphere of radius $1$, which is a property of spherical linear interpolation. The norm of $f_{\mathbf{Z}^{(1)},\mathbf{Z}^{(2)}}(\lambda)$ is distributed according to $g((1-\lambda)g^{-1}(\|\mathbf{Z}^{(1)}\|) + \lambda g^{-1}(\|\mathbf{Z}^{(2)}\|))$ which is independent of (\ref{eg:sph_lin}). Thus, we have shown that $f_{\mathbf{Z}^{(1)},\mathbf{Z}^{(2)}}(\lambda)$ is isotropic.

For the equality of norm distributions we will use a property of \textit{Cauchy-linear interpolation}: $g((1-\lambda)g^{-1}(\|\mathbf{Z}^{(1)}\|) + \lambda g^{-1}(\|\mathbf{Z}^{(2)}\|))$ is distributed identically to $\|\mathbf{Z}\|$. Thus norm of $f_{\mathbf{Z}^{(1)},\mathbf{Z}^{(2)}}(\lambda)$ is distributed equally to $\|\mathbf{Z}\|$.

\end{proof}

Figure~\ref{fig:2d_inters} shows comparison of Cauchy-linear and spherical Cauchy-linear interpolations on 2D plane for data points sampled from different distributions. Figure~\ref{fig:local_comp} shows the \textit{smoothness} of Cauchy-linear interpolation and a comparison between all the aforementioned interpolations. We also compare the data samples decoded from the interpolations by the DCGAN model trained on the CelebA dataset; results are shown on Figure~\ref{fig:inters}.

\begin{figure}[htb]
\begin{subfigure}{.25\textwidth}
  \centering
  \includegraphics[width=\linewidth]{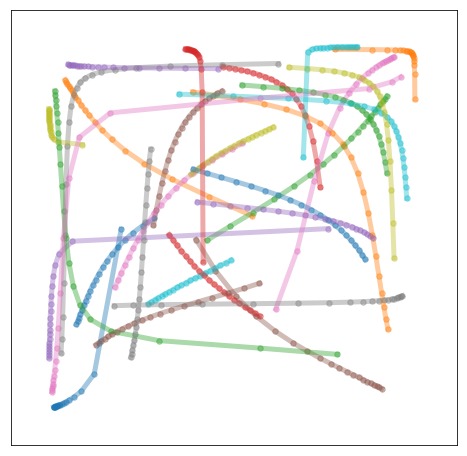}
  \caption{Uniform}
  \label{fig:uniform_to_cauchy}
\end{subfigure}%
\begin{subfigure}{.25\textwidth}
  \centering
  \includegraphics[width=\linewidth]{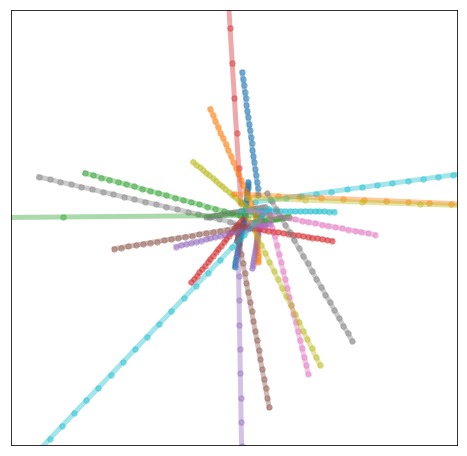}
  \caption{Cauchy}
  \label{fig:cauchy_to_cauchy}
\end{subfigure}%
\begin{subfigure}{.25\textwidth}
  \centering
  \includegraphics[width=\linewidth]{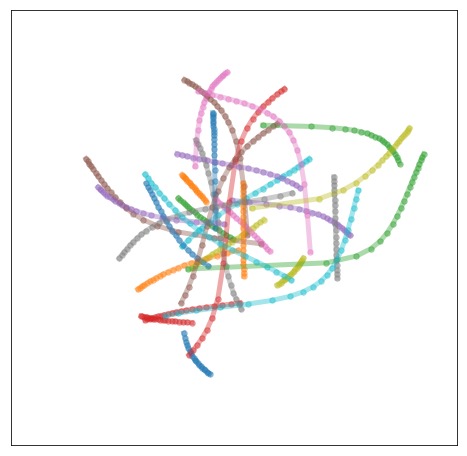}
  \caption{Normal}
  \label{fig:gauss_to_cauchy}
\end{subfigure}%
\begin{subfigure}{.25\textwidth}
  \centering
  \includegraphics[width=\linewidth]{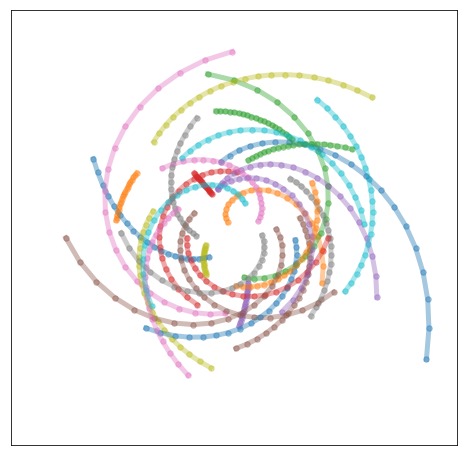}
  \caption{Normal - Spherical}
  \label{fig:normal_to_spherical_cauchy}
\end{subfigure}%
\caption{Visual comparison of Cauchy-linear interpolation (a, b, and c) for points sampled from different distributions and spherical Cauchy-linear (d). }
\label{fig:2d_inters}
\end{figure}

\begin{figure}[htb]
\centering
\begin{subfigure}{.4\textwidth}
  \centering
  \includegraphics[width=\linewidth]{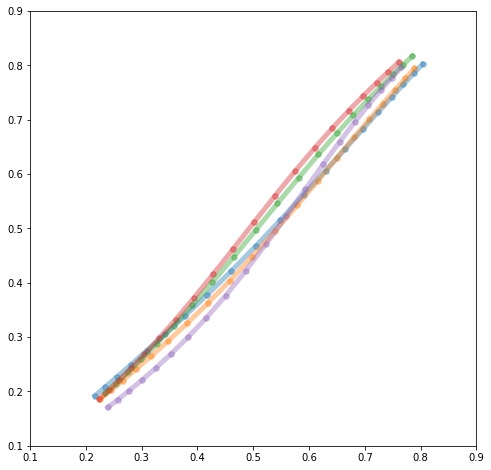}
  \caption{}
  \label{fig:local_cauchy}
\end{subfigure}%
\begin{subfigure}{.4\textwidth}
  \centering
  \includegraphics[width=\linewidth]{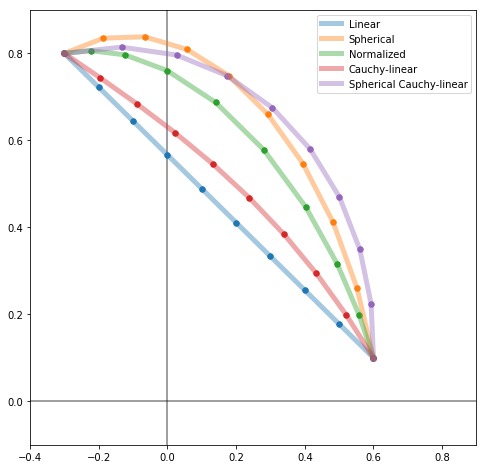}
  \caption{}
  \label{fig:compare_cauchy}
\end{subfigure}

\caption{Visual representation of interpolation property 1 shown for Cauchy-linear interpolation (a) and comparison between all considered interpolation methods for two points sampled from a normal distribution (b) on a 2D plane.}
\label{fig:local_comp}
\end{figure}

We will briefly list the conclusion of this chapter. Firstly, linear interpolation is the best choice if one does not care about the fourth property, i.e. interpolation samples being distributed identically to the end-points. Secondly, for every continuous distribution with additional assumptions we can define an interpolation that has a consistent distribution between endpoints and mid-samples, which will not satisfy property 2, i.e. will not be the shortest path in Euclidean space. Lastly, there exist distributions for which linear interpolation satisfies the fourth property, but those distributions cannot have finite mean.

To combine conclusions from the last two chapters: in our opinion there is a clear direction in which one would search for prior distributions for generative models. Namely, choosing those distribution for which the linear interpolation would satisfy all four properties listed above. On the other hand, in this chapter we have shown that if one would rather stick to the more popular prior distributions, it is fairly simple to define a nonlinear interpolation that would have consistent distributions between endpoints and midpoints.

\begin{figure}[htb]
  \centering
  \includegraphics[width=0.9\textwidth]{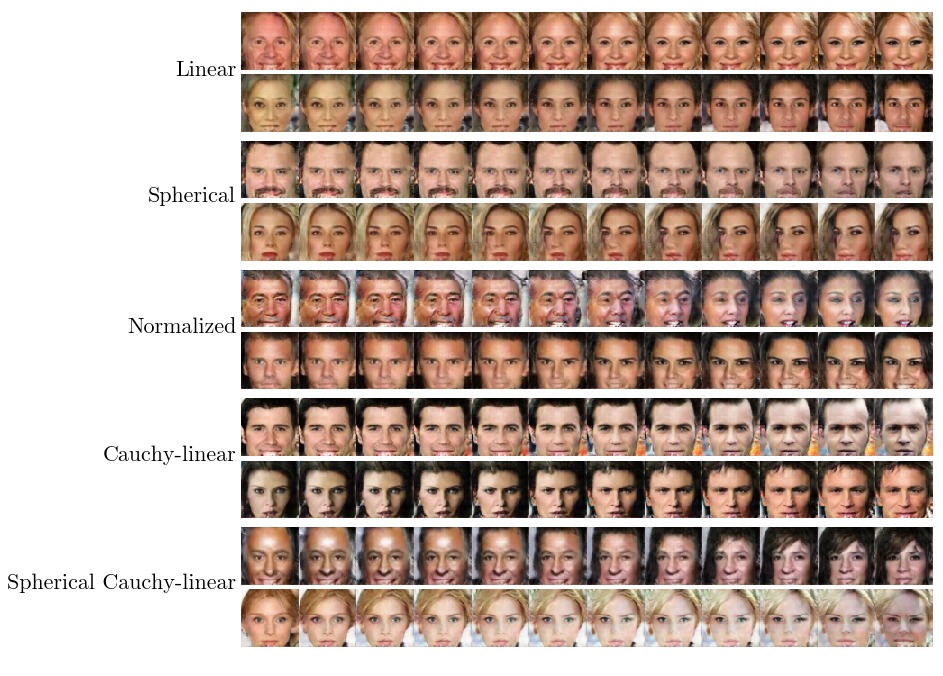}
  \caption{Different interpolations on latent space of a GAN model trained on standard Normal distribution.}
  \label{fig:inters}
\end{figure}

\section{Filling the Void}

In this section we investigate the claim that in close proximity to the origin of the latent space, generative models will generate unrealistic or faulty data~\cite{kilcher2017semantic}. We have tried different experimental settings and were somewhat unsuccessful in replicating this phenomena. Results of our experiments are summarized in Figure~\ref{fig:trials}. Even in higher latent space dimensionality $D=200$, the DCGAN model trained on the CelebA dataset was able to generate a face-like images, although with high amount of noise. We investigated this result furthermore and empirically concluded that the effect of \textit{filling} the origin of latent space emerges during late epochs of training. Figure~\ref{fig:progress} shows linear interpolations through origin of the latent space throughout the training process.   

\begin{figure}[htb]
  \centering
  \includegraphics[width=0.75\textwidth]{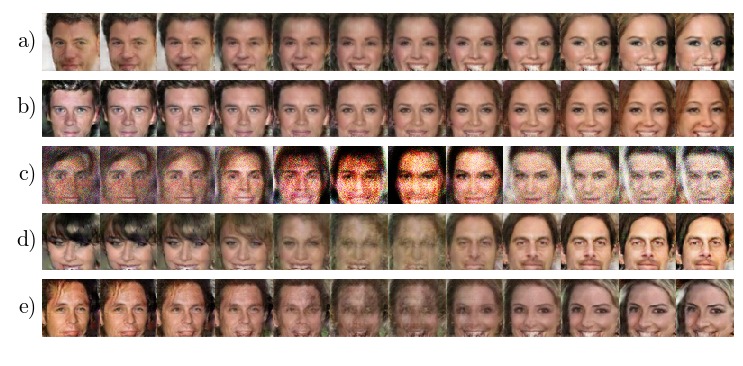}
  \caption{Linear interpolations through origin of latent space for different experimental setups: a) uniform noise distribution on $[-1, 1]^D$, b) uniform noise distribution on a sphere $\mathcal{S}^D$, c) fully connected layers, d) normal noise distribution with latent space dimensionality $D = 150$, e) Normal noise distribution with latent space dimensionality $D = 200$}
  \label{fig:trials}
\end{figure}

\begin{figure}[htb]
  \centering
  \includegraphics[width=0.75\textwidth]{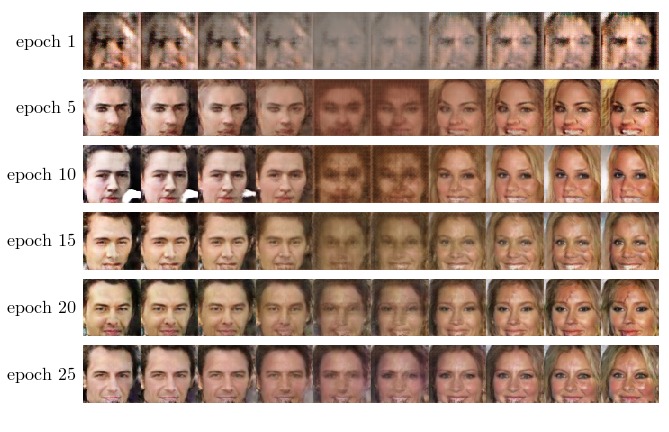}
  \caption{Emergence of \textit{sensible} samples decoded near the origin of latent space throughout the training process demonstrated in interpolations between opposite points from latent space.}
  \label{fig:progress}
\end{figure}

Data samples generated from samples located strictly \textit{inside} the high-probability-mass sphere may not be identically distributed as the samples used in training, but from the decoded result they seem to be on the manifold of data. On the other hand we observed that data generated using latent vectors with high norm, i.e. far \textit{outside} the sphere, are unrealistic. This might be due to the architecture, specifically the exclusive use of \textit{ReLU} activation function. Because of that, input vectors with large norms will result in abnormally high activation just before final saturating nonlinearity (usually \textit{tanh} or \textit{sigmoid} function), which in turn will make the decoded images highly color-saturated. It seems unsurprising that the exact value of the biggest \textit{sensible} norm of latent samples is related to norm of latent vectors seen during training.

The only possible explanation of the fact that model is able to generate sensible data from out-of-distribution samples is a very strong model prior, both architecture and training algorithm. We decided to empirically test strength of this prior in the experiment described below.

 \begin{figure}[htb]
\centering
\begin{subfigure}{0.75\textwidth}
  \includegraphics[width=\linewidth]{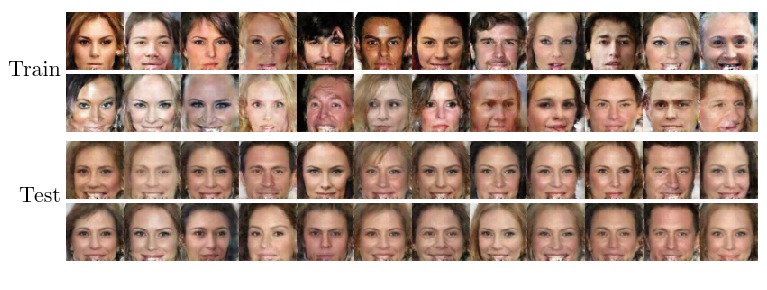}
  \vspace{-25pt}
  \caption{Samples from \textit{train} and \textit{test} distributions.}
  \label{fig:empty_test}
\end{subfigure}
\begin{subfigure}{.75\textwidth}
  \includegraphics[width=\linewidth]{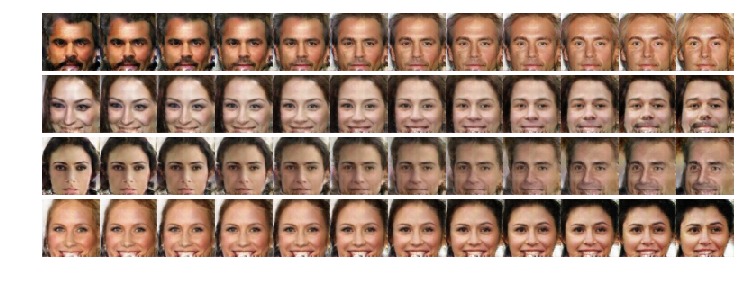}
  \vspace{-25pt}
  \caption{Linear interpolations between random points from \textit{train} distribution.}
  \label{fig:empty_inters}
\end{subfigure}
\caption{Images generated from a DCGAN model trained on 100-dimensional normal distribution. Train samples (from the distribution used in training) and test samples (same as train distribution but with lower variance) (a) and interpolations between random points from the training distribution~(b).}
\label{fig:train_test}
\end{figure}

We trained a DCGAN model on the CelebA dataset using a set of different noise distributions, all of which should suffer from the aforementioned \textit{empty} region in the origin of latent space. Afterwards, using those models, we generate data images decoded from out-of-distribution samples. We would not except to generate sensible images, as those latent samples should have never been seen during training. We visualize samples decoded from inside of the high-probability-mass sphere and linear interpolations traversing through it. We test a few different prior distributions: normal distribution $\mathcal{N}(\mathbf{0}, \mathbf{I})$, uniform distribution on hypercube $\mathcal{U}(-1,1)$, uniform distribution on sphere $\mathcal{S}(\mathbf{0},1)$, Discrete uniform distribution on set $\{-1, 1\}^D$. Experiment result are shown in Figure~\ref{fig:train_test}  with more in the appendix~\ref{cha:app_exp}.
 
\begin{figure}[htb]
\centering
\begin{subfigure}{0.75\textwidth}
  \includegraphics[width=\linewidth]{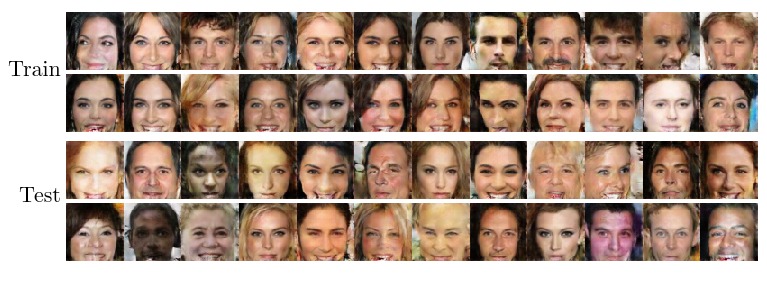}
  \vspace{-25pt}
  \caption{Samples from \textit{train} and \textit{test} distributions.}
  \label{fig:sparse_samples}
\end{subfigure}
\begin{subfigure}{.75\textwidth}
  \includegraphics[width=\linewidth]{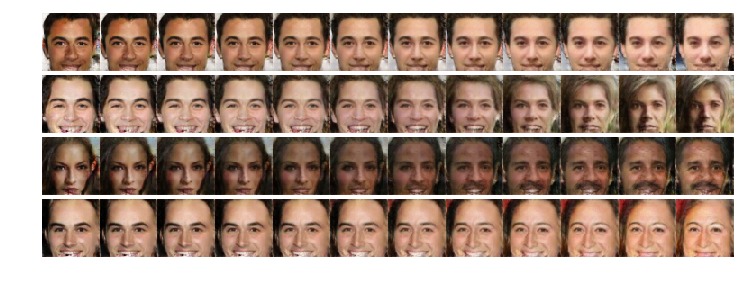}
  \vspace{-25pt}
  \caption{Linear interpolations between random points from \textit{train} distribution.}
  \label{fig:sparse_interns}
\end{subfigure}
\caption{Images generated from a DCGAN model trained on sparse 100-dimensional normal distribution with $K=50$. Train samples (from distribution used in training) and test samples (samples from a \textit{dense} normal distributions with adjust norm) (a) and interpolations between random points from the training distribution~(b).}
\label{fig:sparse_celeb}
\end{figure}

\begin{figure}[htb]
  \centering
  \includegraphics[width=0.6\textwidth]{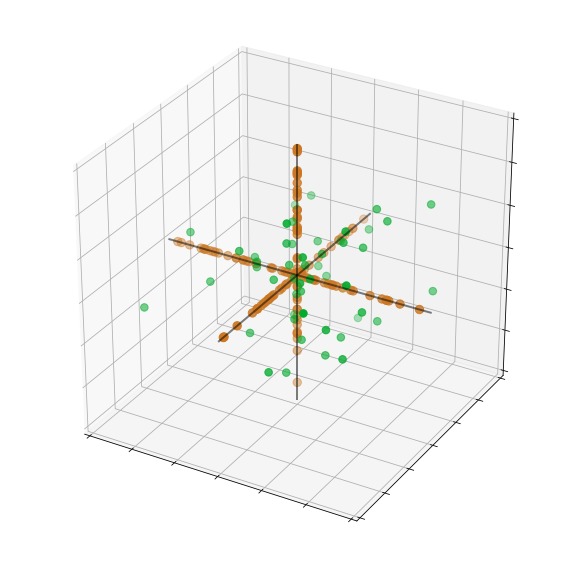}
  \caption{Visualization of the sparse training distribution for $D=3, K=1$, brown points being sampled from the train distribution, green from the test distribution.}
  \label{fig:sparse_2d}
\end{figure}

It might be that the origin of the latent space is \textit{surrounded} by the prior distribution, which may be the reason to generate sensible data from it. Thus we decided to test if the model still works if we train it on an explicitly \emph{sparse} distribution. We designed a pathological prior distribution in which after sampling from a given distribution, e.g. $\mathcal{N}(\mathbf{0}, \mathbf{I})$, we randomly draw $D-K$ coordinates and set them all to zero. Figure~\ref{fig:sparse_2d} shows samples from such a distributions in 3-dimensional case. Again, we trained the DCGAN model and generated images using latent samples from the \textit{dense} distribution, multiplying them beforehand by 
$\sqrt{K/D}$ to keep the norms consistent with those used in training. Results are shown in Figure~\ref{fig:sparse_celeb} with more in the appendix.

Lastly, we wanted to check if the model is capable of generating sensible data from regions of latent space that are completely orthogonal to those seen during training. We created another \textit{pathological} prior distribution, in which, after sampling from a given distribution (e.g. $\mathcal{N}(\mathbf{0}, \mathbf{I})$) we set the $D-K$ last coordinates of each point to zero. As before, we trained the DCGAN model and generated images using samples from original distribution but this time with first $K$ dimensions set to zero. Results are shown in Figure~\ref{fig:subspace_celeb} with more in the appendix.

\begin{figure}[htb]
  \centering
  \includegraphics[width=0.7\textwidth]{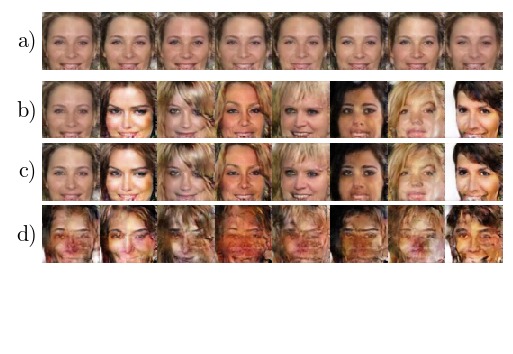}
  \vspace{-40pt}
  \caption{Images decoded from the DCGAN model trained on a \textit{pathological} 100-dimensional prior distribution with 50 last coordinates set to zero for different latent samples: a) orthogonal test distribution with 50 first coordinates set to 0, b) train distribution c) samples from train and test added element-wise  d) samples from test multiplied by the factor of 3 and train added element-wise.}
  \label{fig:subspace_celeb}
\end{figure}

To conclude our experiments we briefly remark on the results. For the first experiment, images generated from the test distribution are clearly meaningful despite being decoded from samples from low-probability-mass regions. One thing to note is the fact that they are clearly less diverse than those from training distribution.   

Decoded images from our second experiment are in our opinion similar enough between the train and test distribution to conclude that the DCGAN model does not suffer from training on a sparse latent distribution and is able to generalize without any additional mechanisms.

Out last experiment shows that while the DCGAN is still able to generate sensible images from regions orthogonal to the space seen during training, those regions still impact generative power and may lead to unrealistic data if they increase the activation in the network enough.

We observed that problems with \textit{hollow} latent space and linear interpolations might be caused by stopping the training early or using a weak model architecture. This leads us to conclusion that one needs to be very careful when comparing effectiveness of different latent probability distributions and interpolation methods.

\section{Summary}

We investigated the properties of multidimensional probability distributions in context of latent noise distribution of generative models. Especially we looked for pairs of distribution-interpolation, where the distributions of interpolation endpoints and midpoints are identical. 

We have shown that using $D$-dimensional Cauchy distribution as latent probability distribution makes linear interpolations between any number of latent points hold that consistency property. We have also shown that for popular priors with finite mean, it is impossible to have linear interpolations that will satisfy the above property. We argue that one can fairly easily find a non-linear interpolation that will satisfy this property, which makes a search for such interpolation less in\-te\-res\-ting. Those results are formal and should work for every generative model with fixed distribution on latent space. Although, as we have shown, Cauchy distribution comes with few useful theoretical properties, it is still perfectly fine to use normal or uniform distribution, as long as the model is powerful enough.

We also observed empirically that DCGANs, if trained long enough, are capable of generating sensible data from latent samples coming out-of-distribution. We have tested several pathological cases of latent priors to give a glimpse of what the model is capable of. At this moment we are unable to explain this phenomena and point to it as a very interesting future work direction.

\FloatBarrier

\bibliographystyle{plainnat}
\bibliography{bibliography}

\newpage
\begin{appendices}

\section{Experimental setup}

All experiments are run using DCGAN model, the generator network consists of a linear layer with 8192 neurons, follow by four convolution transposition layers, each using $5\times5$ filters and strides of 2 with number of filters in order of layers: 256, 128, 64, 3. Except the output layer where \textit{tanh} function activation is used, all previous layers use \textit{ReLU}. Discriminator's architecture mirrors the one from the generator with a single exception of using \textit{leaky ReLU} instead of vanilla \textit{ReLU} function for all except the last layer. No batch normalization is used in both networks. Adam optimizer with learning rate of $2e^{-4}$ and momentum set to $0.5$. Batch size 64 is used throughout all experiments. If not explicitly stated otherwise, latent space dimension is 100 and the noise is sampled from a multidimensional normal distribution $\mathcal{N}(\mathbf{0}, \mathbf{I})$. For the CelebA dataset we resize the input images to $64\times64$. The code to reproduce all our experiments is available at: coming soon!    

\section{Cauchy distribution - samples and interpolations}\label{cha:app_cauchy}

\begin{figure}[H]
  \centering
  \includegraphics[width=\textwidth]{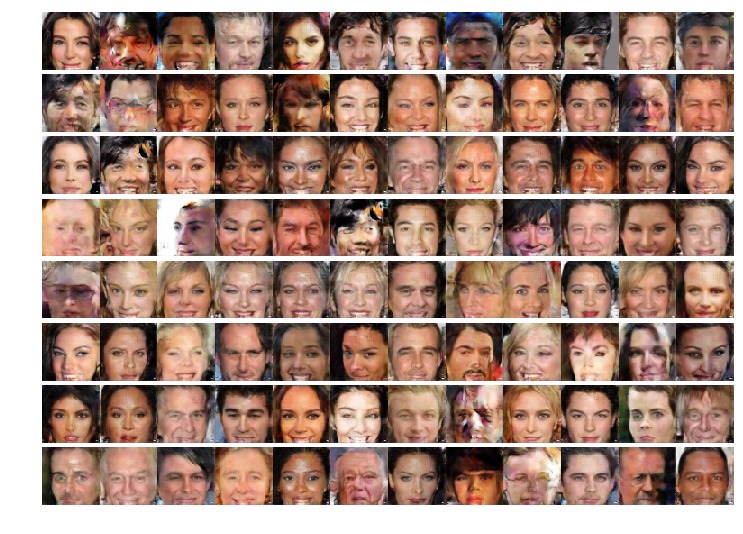}
  \caption{Generated images from samples from Cauchy distribution with occasional "failed" images from tails of the distribution.}
  \label{fig:app_1_samples}
\end{figure}

\begin{figure}[H]
  \centering
  \includegraphics[width=\textwidth]{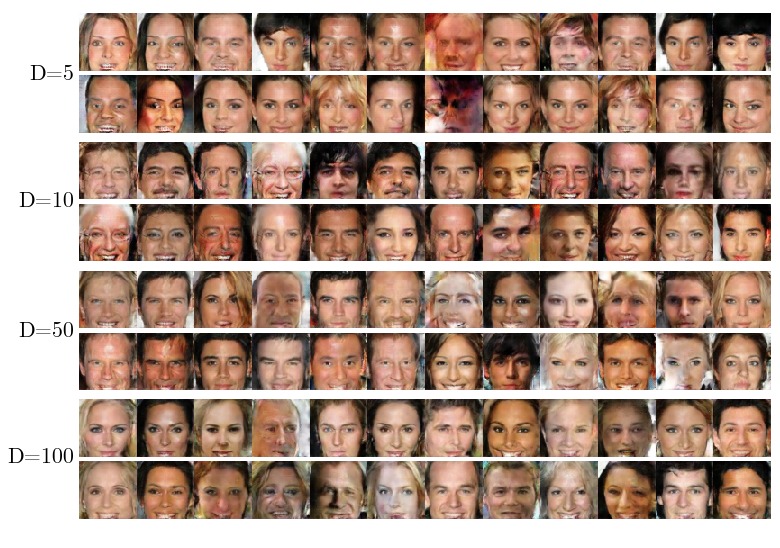}
  \caption{Generated images from samples from Cauchy distribution with different latent space dimensionality.}
  \label{fig:app_1_dim_samples}
\end{figure}

\begin{figure}[H]
  \centering
  \includegraphics[width=\textwidth]{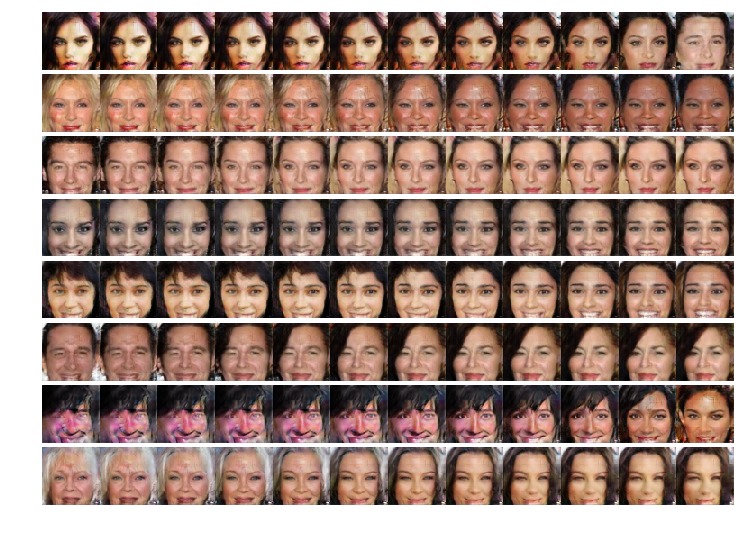}
  \caption{Linear interpolations between random points on a GAN trained on Cauchy distribution.}
  \label{fig:app_1_random_inters}
\end{figure}

\begin{figure}[H]
  \centering
  \includegraphics[width=\textwidth]{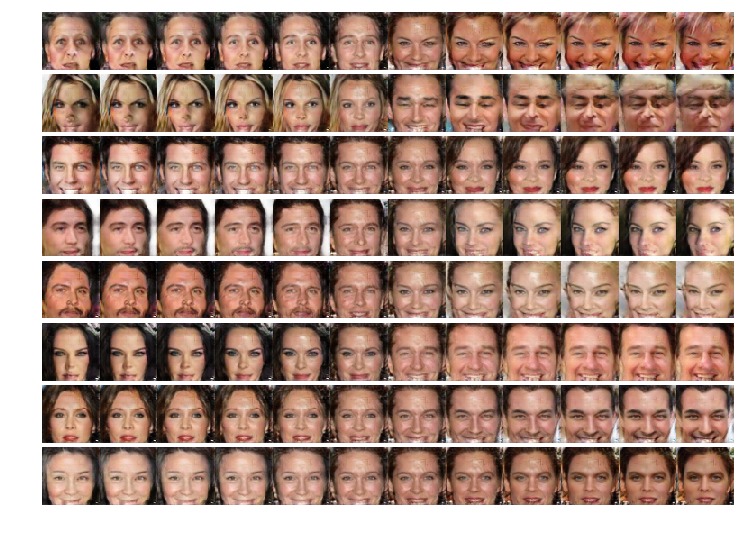}
  \caption{Linear interpolations between opposite points on a GAN trained on Cauchy distribution.}
  \label{fig:app_1_opp_inters}
\end{figure}

\begin{figure}[H]
  \centering
  \includegraphics[width=\textwidth]{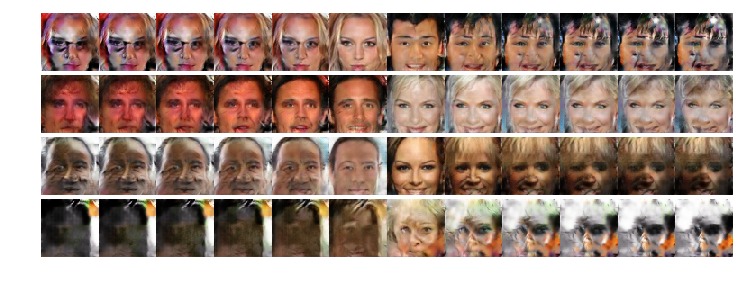}
  \caption{Linear interpolations between hand-picked points from tails of the Cauchy distribution.}
  \label{fig:app_1_failed}
\end{figure}

\section{More Cauchy-linear and spherical Cauchy-linear interpolations}\label{cha:app_inters}

\begin{figure}[H]
  \centering
  \includegraphics[width=0.9\textwidth]{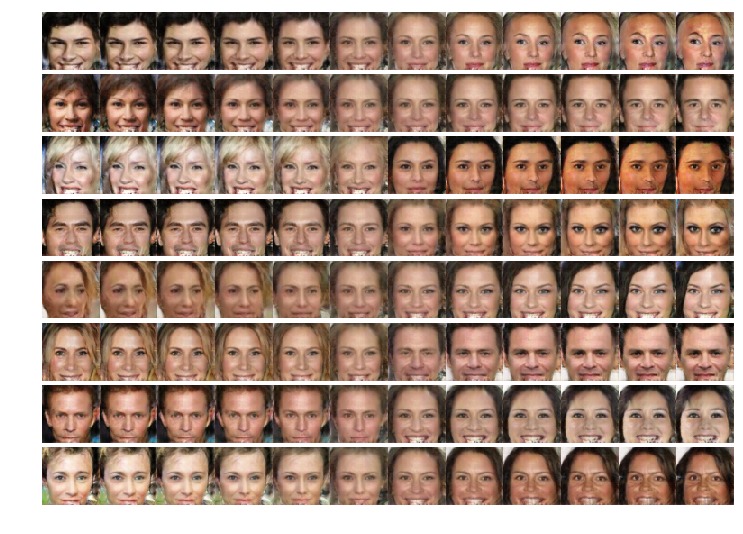}
  \caption{Cauchy-linear interpolations between opposite points on a GAN trained on Normal distribution.}
  \label{fig:app_2_cauchy_linear_opp}
\end{figure}

\begin{figure}[H]
  \centering
  \includegraphics[width=0.9\textwidth]{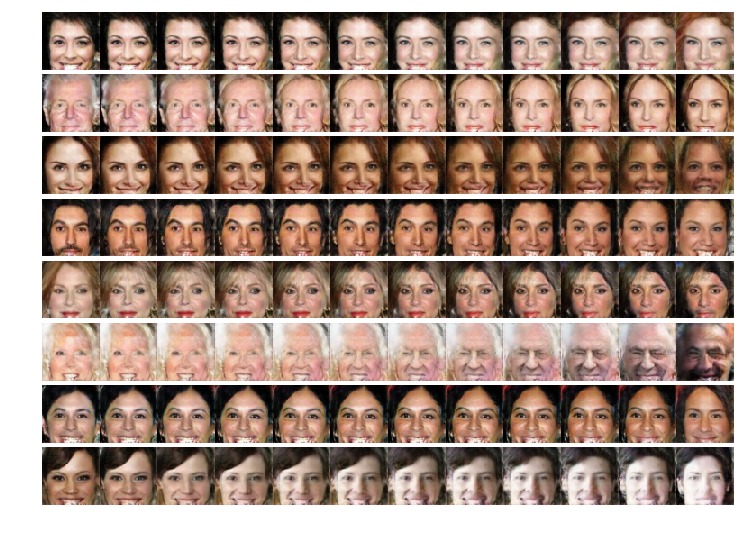}
  \caption{Cauchy-linear interpolations between random points on a GAN trained on Normal distribution.}
  \label{fig:app_2_cauchy_linear_rand}
\end{figure}

\begin{figure}[H]
  \centering
  \includegraphics[width=0.9\textwidth]{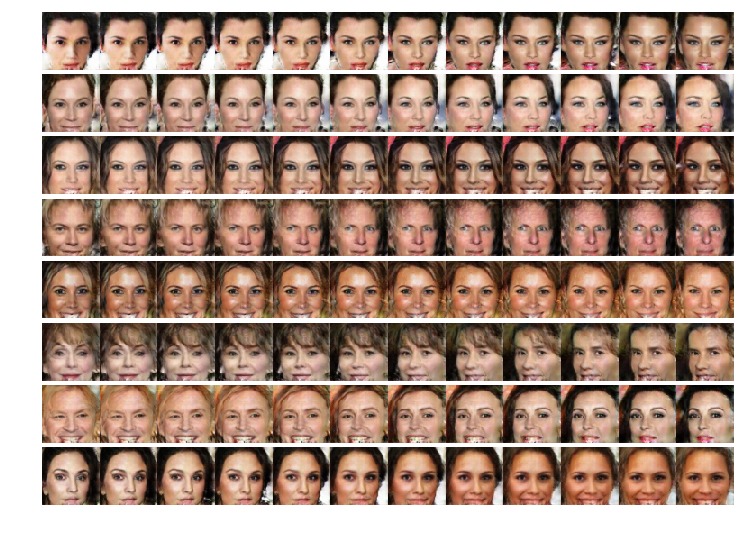}
  \caption{Spherical Cauchy-linear interpolations between random points on a GAN trained on Normal distribution.}
  \label{fig:app_2_sphcauchy_linear_rand}
\end{figure}

\section{More experiments with hollow latent space}\label{cha:app_exp}

 \begin{figure}[H]
\centering
\begin{subfigure}{0.75\textwidth}
  \includegraphics[width=\linewidth]{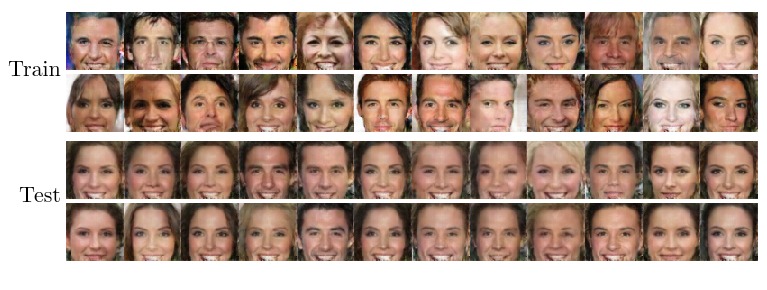}
  \vspace{-25pt}
  \caption{Samples from \textit{train} and \textit{test} distributions.}
  \label{fig:app_3_test_uni_samples}
\end{subfigure}
\begin{subfigure}{.75\textwidth}
  \includegraphics[width=\linewidth]{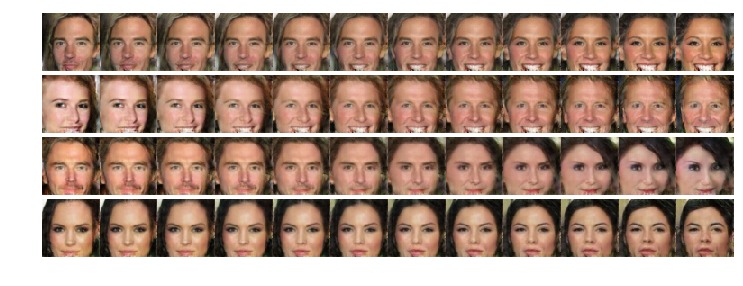}
  \vspace{-25pt}
  \caption{Linear interpolations between random points from \textit{train} distribution.}
  \label{fig:app_3_test_uni_interns}
\end{subfigure}
\caption{Images generated from a DCGAN model trained on 100-dimensional uniform distribution on hypercube $[-1,1]^{100}$. Train samples (from the distribution used in training) and test samples (same as train distribution but multiplied by $\alpha < 1$) (a) and interpolations between random points from the training distribution  (b).}
\label{fig:app_3_test_uni}
\end{figure}

 \begin{figure}[H]
\centering
\begin{subfigure}{0.75\textwidth}
  \includegraphics[width=\linewidth]{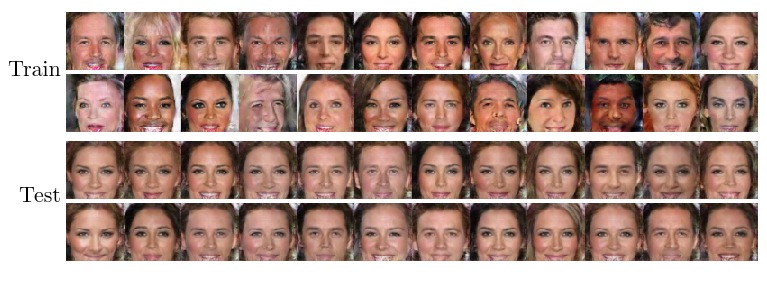}
  \vspace{-25pt}
  \caption{Samples from \textit{train} and \textit{test} distributions.}
  \label{fig:app_3_test_sphere_samples}
\end{subfigure}
\begin{subfigure}{.75\textwidth}
  \includegraphics[width=\linewidth]{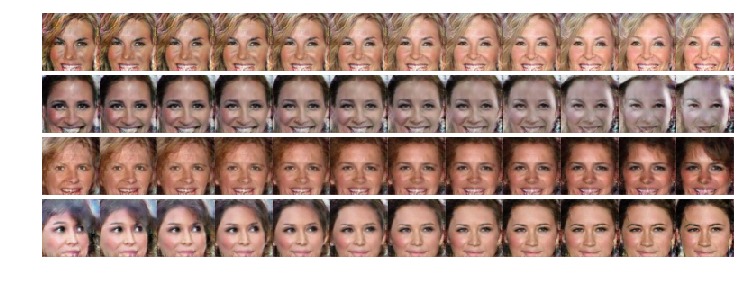}
  \vspace{-25pt}
  \caption{Linear interpolations between random points from \textit{train} distribution.}
  \label{figapp_3_test_sphere_interns}
\end{subfigure}
\caption{Images generated from a DCGAN model trained on 100-dimensional uniform distribution on sphere $\mathcal{S}^{100}$. Train samples (from the distribution used in training) and test samples (same as train distribution but multiplied by $\alpha < 1$) (a) and interpolations between random points from the training distribution  (b).}
\label{fig:app_3_test_sphere}
\end{figure}

 \begin{figure}[H]
\centering
\begin{subfigure}{0.75\textwidth}
  \includegraphics[width=\linewidth]{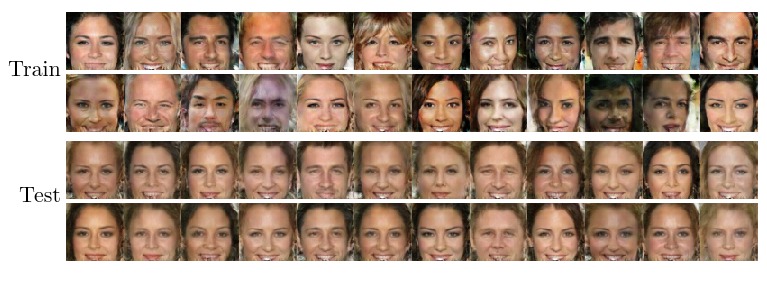}
  \vspace{-25pt}
  \caption{Samples from \textit{train} and \textit{test} distributions.}
  \label{fig:app_3_test_bi_samples}
\end{subfigure}
\begin{subfigure}{.75\textwidth}
  \includegraphics[width=\linewidth]{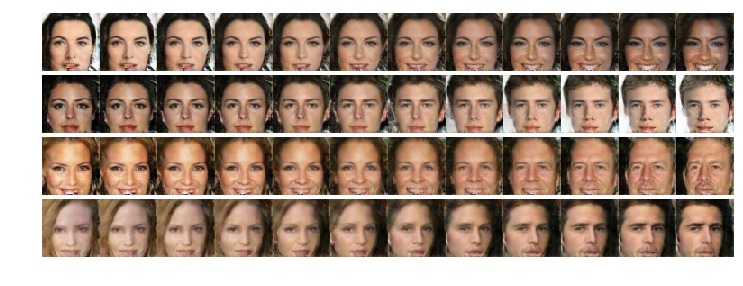}
  \vspace{-25pt}
  \caption{Linear interpolations between random points from \textit{train} distribution.}
  \label{fig:app_3_test_bi_inter}
\end{subfigure}
\caption{Images generated from a DCGAN model trained on 100-dimensional discrete distribution ${-1,1}^{100}$. Train samples (from the distribution used in training) and test samples (same as train distribution but multiplied by $\alpha < 1$) (a) and interpolations between random points from the training distribution  (b).}
\label{fig:app_3_test_bi}
\end{figure}

\end{appendices}

\end{document}